\documentclass{article}

\usepackage[colorlinks,linkcolor=red,anchorcolor=blue,citecolor=green]{hyperref}
\usepackage{arxiv}

\usepackage[utf8]{inputenc} % allow utf-8 input
\usepackage[T1]{fontenc}    % use 8-bit T1 fonts
\usepackage{url}            % simple URL typesetting
\usepackage{booktabs}       % professional-quality tables
\usepackage{amsfonts}       % blackboard math symbols
\usepackage{nicefrac}       % compact symbols for 1/2, etc.
\usepackage{microtype}      % microtypography
\usepackage{lipsum}		% Can be removed after putting your text content
\usepackage{graphicx}
\usepackage{natbib}
\usepackage{doi}
\usepackage{xcolor}         % colors
\usepackage{wrapfig}
\usepackage{enumitem}
\usepackage{amssymb}
\usepackage{amsmath}
\usepackage{amsthm}
\usepackage{algorithm}
\usepackage{algpseudocode}
\usepackage[normalem]{ulem}
\useunder{\uline}{\ul}{}
\usepackage{tcolorbox}
\usepackage{multirow}

\theoremstyle{plain}
\newtheorem{theorem}{Theorem}[section]

\theoremstyle{definition}
\newtheorem{definition}[theorem]{Definition}
\newtheorem{assumption}[theorem]{Assumption}
\theoremstyle{remark}

\title{Is Gradient Ascent Really Necessary?\\ Memorize to Forget for Machine Unlearning}

\author{
  Zhuo Huang$^{1}$,  % \thanks{Equal contributions.}
  Qizhou Wang$^{3}$,   % \footnotemark[1]
  Ziming Hong$^{1}$,
  Shanshan Ye$^{4}$,
  Bo Han$^{2,3}$,
  Tongliang Liu$^{1}$\\[1ex]
  \small{$^1$Sydney AI Centre, The University of Sydney;}
  \small{$^2$Hong Kong Baptist University;} \\
  \small{$^3$RIKEN AIP;}
  \small{$^4$University of Technology Sydney}
}

\date{}

\renewcommand{\shorttitle}{\textit{Is Gradient Ascent Really Necessary?}}

\begin{document}
\maketitle

\begin{abstract}
For ethical and safe AI, machine unlearning rises as a critical topic aiming to protect sensitive, private, and copyrighted knowledge from misuse. To achieve this goal, it is common to conduct gradient ascent (GA) to reverse the training on undesired data. However, such a reversal is prone to catastrophic collapse, which leads to serious performance degradation in general tasks. As a solution, we propose \textit{model extrapolation} as an alternative to GA, which reaches the counterpart direction in the hypothesis space from one model given another reference model. Therefore, we leverage the original model as the reference, further train it to memorize undesired data while keeping prediction consistency on the rest retained data, to obtain a \textit{memorization model}. Counterfactual as it might sound, a \textit{forget model} can be obtained via extrapolation from the memorization model to the reference model. Hence, we avoid directly acquiring the forget model using GA, but proceed with gradient descent for the memorization model, which successfully stabilizes the machine unlearning process. Our model extrapolation is simple and efficient to implement, and it can also effectively converge throughout training to achieve improved unlearning performance.
\end{abstract}

\section{Introduction}
\label{sec:introduction}
The ground-breaking achievement of Large Language Models (LLMs) has significantly boosted both industrial development and human lives~\cite{touvron2023llama,brown2020language,hu2023llm,huang2024machine,wang2024noisegpt}. Through the integration of a tremendous amount of data, LLMs perform as knowledge experts to provide general assistance that is specialized to personal requirements. However, such a close machine-human interaction that is based on large-scale data utilization raises serious concerns about privacy~\cite{yao2024survey}, safety~\cite{wei2023jailbroken,linunderstanding,lin2025force}, and ethical issues~\cite{li2024wmdp, motoki2024more, karamolegkou2023copyright,huang2025trustworthy}. Therefore, Machine Unlearning (MU)~\cite{bourtoule2021machine, cao2015towards, fan2023salun, jia2023model} has raised abundant attention, aiming to protect private information, remove malicious data, and trustworthiness~\cite{liang2022advances}.

A naive approach to model unlearning retrains a foundation model from scratch without the undesired data, but this is prohibitively expensive, often costing millions of dollars for LLMs such as Llama~\cite{touvron2023llama}, Phi~\cite{abdin2024phi}, and OPT~\cite{zhang2022opt}. A more practical alternative reverses the training process via gradient ascent (GA)~\cite{li2024wmdp, hsieh2019classification, jia2023model, chen2023unlearn, wang2023kga, yao2024large}. In practice, the training data are split into a \textit{retain set} to preserve desired knowledge and a \textit{forget set} containing information to be removed. MU is then achieved by maximizing the loss on the forget set—\textit{i.e.}, applying GA on undesired data in contrast to GD on the retain set.

However, gradient ascent could be problematic in practice. Maini et al.~\cite{maini2024tofu} demonstrate that even though GA can improve forget quality, it sacrifices the model utility on real-world tasks. Moreover, Zhang et al.~\cite{zhang2024negative} showed that GA leads to catastrophic collapse, which seriously sabotages the training stability and deviates the models from their initial reference models. As a result, the generalization performance under real-world tasks~\cite{huang2023robust,huang2023harnessing,huang2021universal,li2024dynamic,hong2024improving} degrades drastically after a certain stage of training without gaining satisfactory improvement in forget quality. In order to solve the deficiencies of GA, most of the existing works~\cite{rafailov2023direct, maini2024tofu, zhang2024negative} conduct reweighting for the losses for forgetting and retaining, aiming to trade off the forget quality and model utility. However, no matter how to reweight the losses, the nature of GA will always pose a negative effect on training stability and model utility.

\begin{figure*}
    \centering
    \includegraphics[width=\linewidth]{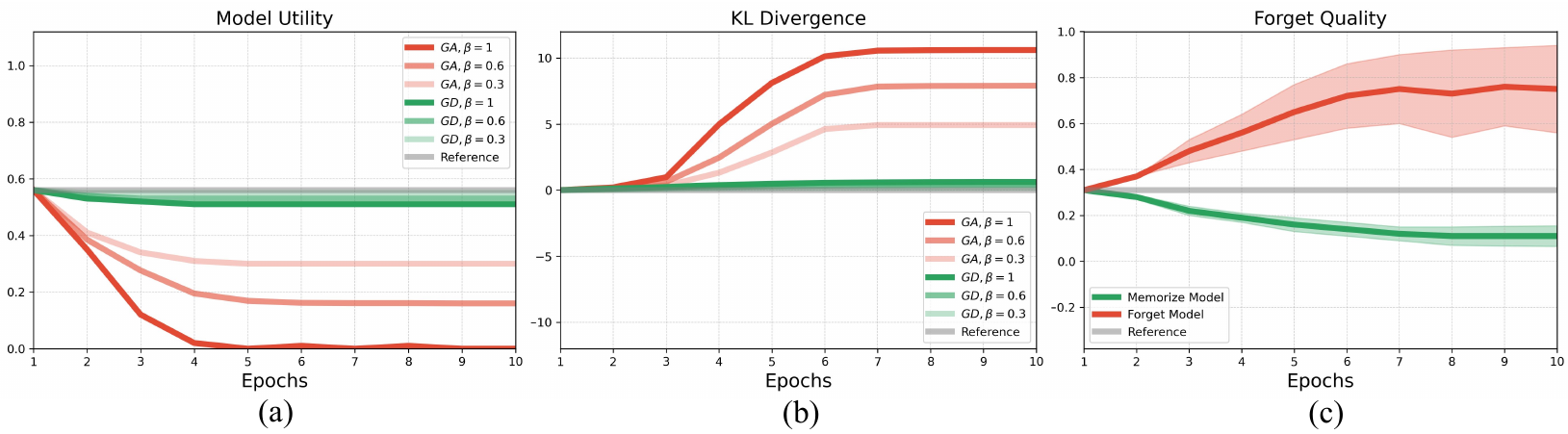}
    \caption{(a) Effect of gradient ascent and gradient descent on model utility under various reweighting levels. (b) Effect of gradient ascent and gradient descent on divergence between training and reference models under various reweighting levels. (c) Comparison of forget quality between the forget model and the memorize model.}
    \label{fig:motivation}
\end{figure*}

To analyze the effects of GA, standard GD, and loss reweighting, we perform an ablation study on the TOFU benchmark~\cite{maini2024tofu}. Using the retain and forget sets, the objectives decompose as $\mathcal{L}_{GA}=\mathcal{L}_{R}-\beta\mathcal{L}_{F}$ for GA, which minimizes risk on the retain set while maximizing risk on the forget set, and $\mathcal{L}_{GD}=\mathcal{L}_{R}+\beta\mathcal{L}_{F}$ for GD, where $\beta$ controls the forget-set contribution. Figure~\ref{fig:motivation} reports model utility on the TOFU real-world set~\cite{maini2024tofu} (a) and deviation from the reference model (b). Compared to GD, GA consistently causes larger performance degradation and greater deviation from the reference model across all $\beta$, whereas GD remains stable during training. This motivates the question of whether model unlearning can be achieved using GD alone, without relying on GA.

We propose a counter-strategy that avoids GA by enhancing memorization on the forget set using GD, yielding a memorization model that serves as the dual of the desired forget model. Using the initial model as a reference, we perform \emph{MOdel eXtrapolation} (MOX) to traverse the counterpart direction from the memorization model toward the reference, producing a forget model within the same hypothesis space that effectively removes undesired knowledge. As shown in Fig.~\ref{fig:motivation} (c), encouraging memorization degrades forget quality, whereas the extrapolated counterpart significantly improves forgetting. This approach stabilizes MU training using only GD, while remaining computationally efficient and dynamically applicable throughout training. Extensive experiments on TOFU~\cite{maini2024tofu} and MUSE~\cite{shi2024muse}, together with analytical studies, validate the effectiveness of MOX against state-of-the-art MU methods and provide further insights.

In Summary, our contribution is threefold:
\begin{itemize}
    \item We stabilize the MU training by avoiding GA, which provides a novel direction for real-world practices to harmonize forget quality and model utility.
    \item We propose a novel methodology named model extrapolation, which can be effectively and efficiently deployed into the training process to achieve MU.
    \item Experimental improvements and practical insights are contributed to the advancement of the MU community.
\end{itemize}

\section{Methodology}
\label{sec:methodology}
In this section, we elucidate the proposed MOdel eXtrapolation (MOX) for MU. First, we formulate the problem setting and with existing baselines. Then, we introduce our methodology, MOX. Finally, we describe the practice of MOX in training and discuss its advantages for unlearning.

\subsection{Preliminaries}
In MU, we are given a retain set $\mathcal{D}_R=\{(x_i, y_i)\}_{i=1}^m$ with $m$ instances with each including data $x_i$ and label $y_i$, and a forget set $\mathcal{D}_F=\{(x_i, y_i)\}_{i=m+1}^{m+n}$ which contains $n$ instances. Both $\mathcal{D}_F$ and $\mathcal{D}_R$ belong to our training set $\mathcal{D}$. For MU tasks with no target during the unlearning process, termed non-targeted MU, they only focus on forgetting the knowledge. On the contrary, targeted MU that aims to forget the knowledge, and meanwhile enforces outputting a certain target $\tilde{y_i}$. We denote the unlearning model $h_{\theta}(\cdot)$, which is parameterized by $\theta$. Under the framework of LLMs, the input data is a sequence of tokens $x=(t_1, t_2, \ldots, t_l)$, which is forwarded into the unlearning model for next-token prediction on $y$: $h_{\theta}(y|x)=\prod_{i=1}^{l} h_{\theta}(t_i|t_{<i})$. The learning objective for training LLMs commonly employs Cross-Entropy Loss $\mathcal{L}_{CE}(x, y, \theta)=\log (h_{\theta}(y|x))$, through which we can obtain the initial model, \textit{i.e.}, reference model $\theta_{ref}$. The performance of MU is commonly evaluated via the prediction quality on both the forget quality and model utility~\cite{maini2024tofu} from the forget and the retain sets, respectively.

\paragraph{Gradient Ascent (GA)~\cite{maini2024tofu}} is a straightforward solution to reverse the training process that minimizes the Cross-Entropy Loss, which is formulated as:
\begin{equation}
    \mathcal{L}_{GA}=-\frac{1}{n}\sum_{(x_i, y_i)\in\mathcal{D}_F}\mathcal{L}_{CE}(x_i, y_i, \theta).
    \label{eq:gradient_ascent}
\end{equation}
Intuitively, gradient ascent aims to deviate the learning process from the original direction. Note that the loss is computed only on the forget set; thus, the performance on the retain set would be affected and lead to degradation. Moreover, such a maximization is unbounded, which could lead to serious collapse~\cite{maini2024tofu, zhang2024negative}. 

\paragraph{Gradient Ascent with Difference (GAD)~\cite{liu2022continual}} improves gradient ascent via introducing a regularization term on the retain set:
\begin{align}
    \mathcal{L}_{GAD}&=\frac{1}{m}\sum_{(x_i, y_i)\in\mathcal{D}_R}\mathcal{L}_{CE}(x_i, y_i, \theta)\notag\\
    &-\frac{1}{n}\sum_{(x_j, y_j)\in\mathcal{D}_F}\mathcal{L}_{CE}(x_j, y_j, \theta).
    \label{eq:gradient_ascent_difference}
\end{align}
The difference is denoted by the opposite gradient direction between the retain and forget sets. By adding the regularization, it acts as a compromise between forgetting and model utility on the retain set. However, it is hard to balance the two losses in practice. Moreover, the two losses could conflict with each other during training~\cite{maini2024tofu, wang2025rethinking}.

\paragraph{Negative Preference Optimization (NPO) ~\cite{zhang2024negative}} takes inspiration from Direct Preference Optimization~\cite{rafailov2023direct}, which basically conducts weighted GA instead of using Cross-Entropy optimization:
\begin{equation}
    \mathcal{L}_{NPO}=-\frac{2}{n\beta}\sum_{(x_i, y_i)\in\mathcal{D}_F}\log\sigma(-\beta\log(\frac{h_{\theta}(y_i|x_i)}{h_{\theta_{ref}}(y_i|x_i)})),
    \label{eq:npo}
\end{equation}
where $\beta$ is a temperature hyperparameter, $\sigma(\cdot)$ is the sigmoid function, and $h_{\theta_{ref}}(\cdot)$ denotes the forward function of the reference model. We can see that the temperature acts as a scaling parameter to reweight the importance of each instance. However, it still suffers from inferior model utility and instability due to the effect of GA-based optimization~\cite{wang2024llm, wang2025rethinking}. However, despite the deficiencies of GA, the advantages of the existing methods can still be leveraged. Therefore, we propose a counter-strategy that achieves unlearning via memorization.

\begin{figure*}
    \centering
    \includegraphics[width=\linewidth]{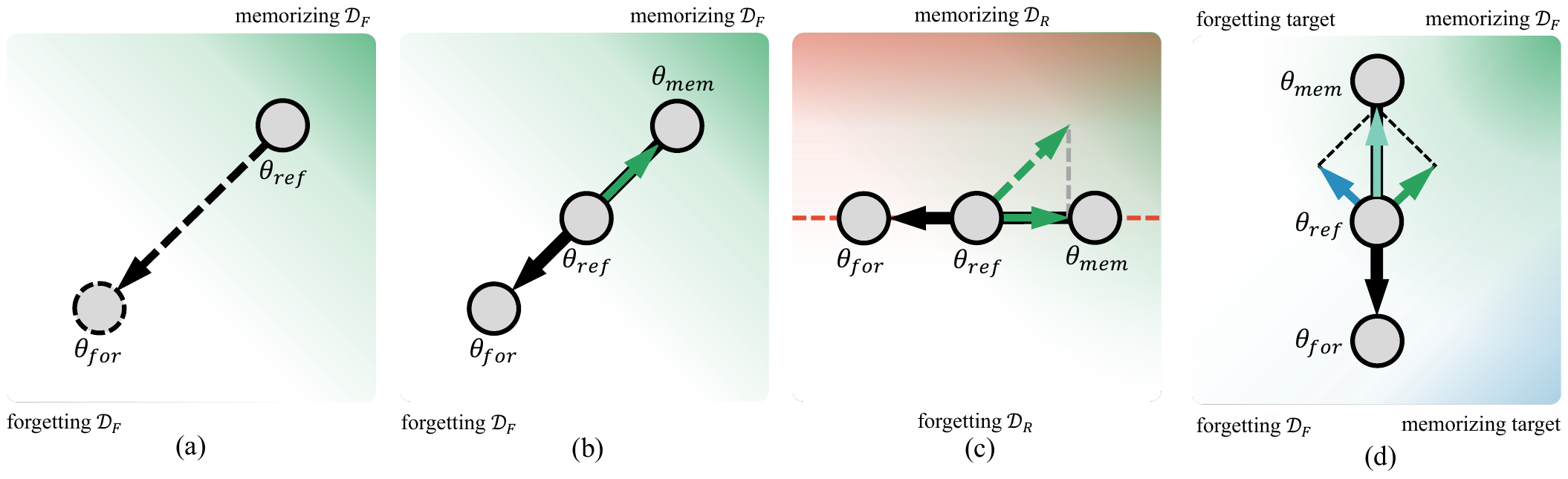}
    \caption{Illustration of MOX. Color intensity indicates dataset fit, colored arrows denote learning directions, and black arrows indicate model extrapolation. Directly deriving the forget model $\theta_{for}$ from the reference model $\theta_{ref}$ via gradient ascent is infeasible, as it reverses pre-training and leads to optimization failures. Instead, we apply gradient descent to memorize $\mathcal{D}_F$, obtaining a memorization model $\theta_{mem}$, and then extrapolate to produce $\theta_{for}$ that effectively forgets $\mathcal{D}_F$. To preserve utility, a KL-divergence constraint enforces consistency between $\theta_{ref}$ and $\theta_{mem}$, improving the utility of $\theta_{for}$. For targeted unlearning, an additional forgetting loss—compatible with pre-training—is combined with the memorization loss to perform MOX and obtain $\theta_{for}$.}
    \label{fig:illustration}
\end{figure*}

\subsection{Memorize to Forget}
Given the previous findings that using GA for forgetting is problematic, and using GD can only achieve memorization instead of our unlearning goal. Therefore, understanding the relationship between GA and GD is vital to accomplish further ``memorize to forget''. Ilharco et al.~\citep{ilharco2022editing} proposed a \textit{task vector} that demonstrates negating a task vector results in reduced performance on the task, i.e., forgetting the task. Specifically, given a fine-tuned model $f_{new}$, applying a negative vector $-f_{new}$ to the pre-trained model $f$ can extrapolate between $f$ and $f_{new}$. Hence, it is possible to first enhance memorization, then use extrapolation to achieve forgetting, as explained by our MOX below.

\subsection{Model Extrapolation (MOX)}
Based on the above discussion, we suggest a \textit{irreversible gradient} criterion:
\begin{definition}[irreversible gradient]
Given a pre-trained model $\theta$ that has been trained through an optimization task $\Psi$, any downstream fine-tuning task with a gradient that reverses the gradient of $\Psi$ should be avoided.
\label{def}
\end{definition}
The above definition denotes that reversing the gradient via GA is not preferred, which is an important criterion for designing our methodology. The intuition of our methodology is shown in Fig.~\ref{fig:illustration}, in which (a) denotes that directly obtaining the forget model $\theta_{for}$ via GA is not feasible due to the irreversible gradient criteria, as $\mathcal{D}_F\subset\mathcal{D}$, and forgetting $\mathcal{D}_F$ goes against the pre-trianing task aiming to fit $\mathcal{D}_F$.

Therefore, we propose to intensify memorization on the forget set, and meanwhile maintain the prediction consistency on the retain set between the training model and the reference model. Hence, we can obtain a memorization model $\theta_{mem}$ by leveraging Cross-Entropy Loss:
\begin{align}
    \mathcal{L}_{mem}&=\frac{1}{n}\sum_{(x_j, y_j)\in\mathcal{D}_F}\mathcal{L}_{CE}(x_j, y_j, \theta)\notag\\
    &+\frac{1}{m}\sum_{(x_i, y_i)\in\mathcal{D}_R}\text{KL}(h_{\theta_{ref}}(y|x)\|h_{\theta}(y|x)).
    \label{eq:memorization_GA}
\end{align}
Similarly, we can follow NPO to leverage the preference optimization objective:
\begin{align}
    \mathcal{L}_{mem}&=\frac{2}{n\beta}\sum_{(x_i, y_i)\in\mathcal{D}_F}\log\sigma(-\beta\log(\frac{h_{\theta}(y|x)}{h_{\theta_{ref}}(y|x)}))\notag\\
    &+\frac{1}{m}\sum_{(x_i, y_i)\in\mathcal{D}_R}\text{KL}(h_{\theta_{ref}}(y|x)\|h_{\theta}(y|x)).
    \label{eq:memorization_NPO}
\end{align}
In both objectives, the first terms aim to enhance memorization of the forget set. As shown in Fig.~\ref{fig:illustration} (b), although we can already obtain a memorization model and effectively obtain a forget model that achieves the forgetting goal (see Section \ref{sec:experiments}), we hope to further maintain the model utility. Hence, we introduce a second term in both Eqs.~\eqref{eq:memorization_GA} and~\eqref{eq:memorization_NPO} which is the KL-divergence regularization that aims to minimize the Kullback-Leibler (KL) divergence between the predictions on $\mathcal{D}_R$ of the reference model $\theta_{ref}$ and the training model $\theta$. As shown in Fig.~\ref{fig:illustration} (c), the learning direction of memorization is constrained to maintain the performance on $\mathcal{D}_R$\footnote{Several works~\cite{wang2025rethinking, maini2024tofu} aim to achieve enhanced model utility performance on $\mathcal{D}^R$ and forgetting simultaneously. However, we only focus on improving forgetting quality and maintaining model utility, because the utility performance can be effectively enhanced via fine-tuning the pre-trained model~\cite{hu2023llm,zhao2024galore}. Further, our method is built on the enhanced LLMs for MU without sacrificing the utility performance. Thus, only focusing on forgetting and maintaining utility is sufficient for MU.}, which helps maintain the general utility performance for the memorization model. By optimizing only with GD, we show that it avoids collapse.

\begin{definition}[Collapse]
We say a parameter $\theta$ is collapsed on a subset $D_{sub}$ if the model outputs a degenerate distribution that is nearly constant across all contexts. Formally,
\begin{equation}
    \exists y^* \quad \text{s.t.} \quad 
    p_\theta(y^* \mid c) \approx 1 
    \ \text{for almost all contexts } c.
\end{equation}
More generally, collapse can be quantified by the average KL divergence between the empirical conditional distributions $q(\cdot \mid c_i)$ and the model predictions:
\begin{equation}
    \mathrm{Collapse}(\theta):=\frac{1}{n}\sum_{i=1}^n 
    D_{\mathrm{KL}}\left(
        q(\cdot \mid c_i) \middle\| p_\theta(\cdot \mid c_i)
    \right).
\end{equation}
Large values of $\mathrm{Collapse}(\theta)$ correspond to assigning near-delta mass to a small set of tokens regardless of the input.
\end{definition}
Under the following assumptions:
\begin{assumption}
The loss $L$ given a model $\theta$, it satisfies:
\begin{itemize}
    \item Lower bounded loss: $L(\theta)\ge 0$ for all $\theta$.
    \item $L_s$-smooth: $\|\nabla L(\theta)-\nabla L(\theta')\|\le L_s \|\theta-\theta'\|$.
    \item Data diversity: For each context $c_i$, the empirical conditional distribution $q(\cdot\mid c_i)$ has non-zero entropy.
    \item Polyak--Łojasiewicz (PL) condition: We say that $L(\theta)$ satisfies the PL condition with parameter $\mu > 0$ if $\frac{1}{2}\|\nabla L(\theta)\|^2\ge\mu \big( L(\theta) - L^\star \big)$, where $L^\star=\inf_{\theta} L(\theta).$
\end{itemize}
\end{assumption}
\begin{theorem}
\label{theorem}
    By avoiding the irreversible gradient via GD on $D_{sub}$, the model converges without collapse.
\end{theorem}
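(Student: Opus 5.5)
The plan is to make the informal statement of Theorem~\ref{theorem} precise as follows. Let $L$ be the memorization objective $\mathcal{L}_{mem}$, either Eq.~\eqref{eq:memorization_GA} or Eq.~\eqref{eq:memorization_NPO}, restricted to $D_{sub}$. This is a sum of cross-entropy (or NPO-type) terms on $\mathcal{D}_F$ and KL terms on $\mathcal{D}_R$, all nonnegative. We run GD, $\theta_{t+1}=\theta_t-\eta\nabla L(\theta_t)$ with $\eta\le 1/L_s$, which is descent on the forward task and so never reverses the gradient of $\Psi$, in line with Definition~\ref{def}. The claim is then proved in two parts: (i) $L(\theta_t)\to L^\star$ linearly, and (ii) $\mathrm{Collapse}(\theta_t)$ stays bounded by a quantity that does not blow up, so the predictions cannot become near-delta.

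\textbf{Step 1 (descent).} By $L_s$-smoothness, the standard descent lemma gives
\[
L(\theta_{t+1})\le L(\theta_t)-\eta\bigl(1-\tfrac{L_s\eta}{2}\bigr)\|\nabla L(\theta_t)\|^2\le L(\theta_t)-\tfrac{\eta}{2}\|\nabla L(\theta_t)\|^2 .
\]
Hence $L(\theta_t)$ is nonincreasing. Since $L\ge 0$, the sequence is bounded, so $L(\theta_t)\le L(\theta_0)$ for all $t$.

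\textbf{Step 2 (rate).} Substituting the PL inequality gives $L(\theta_{t+1})-L^\star\le(1-\eta\mu)\bigl(L(\theta_t)-L^\star\bigr)$, and therefore $L(\theta_t)-L^\star\le(1-\eta\mu)^t\bigl(L(\theta_0)-L^\star\bigr)$. This settles convergence. Summing the descent inequality also shows $\sum_t\|\nabla L(\theta_t)\|^2<\infty$, so the gradients vanish. By contrast, for GA the objective $-\mathcal{L}_{CE}$ is unbounded below, so neither the lower-bound assumption nor PL can hold, and Step 1 gives no control. This contrast is the point of the irreversible-gradient criterion.

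\textbf{Step 3 (no collapse).} This is the main obstacle, because the loss being minimized is not literally $\mathrm{Collapse}(\theta)$. I would link the two through the decomposition
\[
\mathbb{E}_{q}[-\log p_\theta(\cdot\mid c_i)]=H(q(\cdot\mid c_i))+D_{\mathrm{KL}}\bigl(q(\cdot\mid c_i)\,\|\,p_\theta(\cdot\mid c_i)\bigr).
\]
Averaged over contexts, the cross-entropy part of $L$ upper bounds $\mathrm{Collapse}(\theta)$. Consequently $\mathrm{Collapse}(\theta_t)\le L(\theta_t)\le L(\theta_0)$ for every $t$, and in the limit $\mathrm{Collapse}\le L^\star$. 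For the retain part, the KL terms directly keep $p_\theta$ close to $h_{\theta_{ref}}$, which is itself non-degenerate.

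Next I show that bounded KL rules out a near-delta output. By data diversity, each $q(\cdot\mid c_i)$ has entropy $H_i>0$, so it puts mass at least some $\delta_i>0$ on some token $y\neq y^*$. If $p_\theta(y^*\mid c)\ge 1-\epsilon$, then $p_\theta(y\mid c)\le\epsilon$, and a Pinsker or log-sum bound gives $D_{\mathrm{KL}}(q\|p_\theta)\ge \delta_i\log(\delta_i/\epsilon)-O(1)\to\infty$ as $\epsilon\to 0$. Since this KL is bounded by $L(\theta_0)$, $\epsilon$ is bounded away from zero. More precisely, $\epsilon\ge \delta_i\exp\bigl(-(nL(\theta_0)+c)/\delta_i\bigr)$, so the outputs are never degenerate along the whole trajectory.

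For the NPO variant of Eq.~\eqref{eq:memorization_NPO}, I would first note that its forget term is monotone in $\log h_\theta(y\mid x)$ and bounded below. With a mild extra argument it can likewise be lower bounded by a multiple of the cross-entropy, or else the KL retain term alone carries the non-collapse conclusion on $\mathcal{D}_R$. Making this reduction rigorous, together with the uniform-in-$t$ bound, is the step I expect to need the most care.
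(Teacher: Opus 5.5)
Your Steps 1 and 2 are the same as the paper's proof: the descent lemma with $\eta\le 1/L_s$, then PL to force $L(\theta_t)\to L^\star$. Your linear rate is a slightly sharper form of the paper's ``gradients vanish, so $L\to L^\star$''.

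The routes diverge at the no-collapse step. The paper identifies the limit point. It writes the loss as $\frac{1}{n}\sum_i\bigl[H(q(\cdot\mid c_i))+D_{\mathrm{KL}}(q(\cdot\mid c_i)\,\|\,p_\theta(\cdot\mid c_i))\bigr]$ and assumes realizability, so $L^\star=\frac{1}{n}\sum_i H(q(\cdot\mid c_i))$ is attained exactly when $p_{\theta^\star}=q$. Data diversity then makes the limit non-degenerate.

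You never locate the minimizer. Instead you use monotone descent to bound $\mathrm{Collapse}(\theta_t)\le L(\theta_0)$ along the whole trajectory, and you turn bounded KL into an explicit floor on the mass of non-dominant tokens. This route has three advantages:
\begin{itemize}
\item it controls every iterate, not only the limit;
\item it needs no realizability;
\item it survives adding the retain KL term. The paper's argument would then need a single $\theta$ that matches $q$ on $\mathcal{D}_F$ and $h_{\theta_{ref}}$ on $\mathcal{D}_R$ simultaneously.
\end{itemize}

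The price is sharpness. Your floor $\delta_i e^{-(nL(\theta_0)+c)/\delta_i}$ is astronomically small when $L(\theta_0)$ is large. Your limiting bound $\mathrm{Collapse}\le L^\star$ is loose because you discarded the entropy. If you keep the entropy term, then $\mathrm{Collapse}(\theta_t)=L_{\mathrm{CE}}(\theta_t)-\frac{1}{n}\sum_i H(q(\cdot\mid c_i))$, where $L_{\mathrm{CE}}$ is the forget cross-entropy part of $L$. This tends to $L^\star-\frac{1}{n}\sum_i H(q(\cdot\mid c_i))$, which is $0$ under realizability, and so recovers the paper's conclusion $p_{\theta^\star}=q$ as a special case.

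One caution about your NPO remark, which goes beyond what the paper proves. The NPO forget term equals $-\frac{2}{n\beta}\sum\log\bigl(1+(h_\theta/h_{\theta_{ref}})^\beta\bigr)$. It is nonpositive, not nonnegative as you state, though being bounded below is all Steps 1--2 need. More importantly, it stays bounded as $h_\theta(y\mid x)\to 0$ (it tends to $0$, its supremum). Three consequences follow:
\begin{itemize}
\item it cannot be lower bounded by any multiple of $-\log h_\theta$;
\item the bound $L(\theta_t)\le L(\theta_0)$ gives no control against a valid continuation's probability vanishing;
\item your retain-KL fallback only yields non-collapse on $\mathcal{D}_R$, not on $D_{sub}$.
\end{itemize}
Your cross-entropy argument stands, but the NPO case would need a genuinely different idea.
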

Please see the proof in the appendix.

As a result, the obtained model $\theta_{mem}$ retains the preferred knowledge but even fits intensely to the forget set $\mathcal{D}_F$. However, our desired model $\theta_{for}$ should maintain the knowledge from $\mathcal{D}_R$ and forget the undesired knowledge in $\mathcal{D}_F$. Hence, we can reasonably claim that $\theta_{mem}$ is a counterpart of $\theta_{for}$, where the reference model $\theta_{ref}$ plays a critical role between memorizing and forgetting, also shown empirically in Sec.\ref{sec:experiments}. To further obtain the forget model $\theta_{for}$, we conduct MOX to extrapolate from $\theta_{mem}$ to $\theta_{ref}$ via the following:
\begin{equation}
    \theta_{for}:=(1+\alpha)\theta_{ref}-\alpha\theta_{mem}, \ \alpha\in\mathbb{R}^+,
    \label{eq:non_convex}
\end{equation}
where the hyper-parameter $\alpha$ controls the extrapolation strengths for MOX. If $\alpha$ is set to a high value, the forget quality can be enhanced. We show in experiments (Section \ref{sec:experiments}) that the performance improvement of forgetting is significant by reasonably increasing $\alpha$. Although we found that extreme values of $\alpha$ would lead to a decrease in model utility, the decrease is much slighter than catastrophic collapse in GA, thus, it is easy to detect and avoid. 

Intuitively, by deriving Eq.~\ref{eq:non_convex}, we have $\theta_{for}:=\theta_{ref}+\alpha(\theta_{ref}-\theta_{mem})$, where the task vector $\theta_{ref}-\theta_{mem}$ is scaled by $\alpha$ and added to $\theta_{ref}$. Since the processing of obtaining $\theta_{mem}$ from $\theta_{ref}$ is memorizing, the opposite direction $\theta_{ref}-\theta_{mem}$ is forgetting, as demonstrated in Ilharco et al.~\citep{ilharco2022editing}. Thus, the difference between our forget model and the reference model $\theta_{for}-\theta_{ref}$ can be interpreted by forgetting the knowledge that $\theta_{mem}$ has memorized. Such a reversal is feasible based on the well-trained LLMs that are approximately linear locally. Therefore, it can be explained through Neural Tangent Kernel~\cite{jacot2018neural}, where 
$f(\theta) \approx f(\theta_0) + \nabla f(\theta_0)^T (\theta - \theta_0)$. Thus, steering models via a linear transformation is feasible in practice. Next, we present two practical implementations of MOX with advanced effectiveness.

\paragraph{Targeted Unlearning} is one MU learning problem where a target is provided for examples to be forgotten. As shown in Fig. \ref{fig:illustration} (d), we hope to forget $\mathcal{D}_F$ and memorize the target simultaneously. Although the previous discussion is under non-targeted scenarios, we can still tackle targeted MU by effectively introducing a targeted unlearning term:
\begin{equation}
    \mathcal{L}=\mathcal{L}_{mem}-\frac{1}{m}\sum_{(x_i, y_i)\in\mathcal{D}_R}\mathcal{L}_{CE}(x_i, \tilde{y}_i, \theta),
    \label{eq:targeted_mu}
\end{equation}
where $\tilde{y}_i$ is the target. Similar to Eq.~\eqref{eq:memorization_NPO}, we can apply preference optimization for realization. Through training to obtain $\theta_{mem}$, MOX can be applied to targeted MU cases. Note that directly maximizing the risk does not conflict with pre-training, thus it can be effectively applied.

\paragraph{Momentum Extrapolation} is an effective technique for improving generalization and forget quality, thanks to the computation efficiency of our MOX. Since the extrapolation process can be conducted on-the-fly, we can gradually update the forget model by ensembling historical versions throughout the training:
\begin{equation}
    \theta_{for}^t:=\eta\theta_{for}^t+(1-\eta)\theta_{for}^{t-1},
    \label{eq:momentum_mox}
\end{equation}
where $\eta$ is a momentum coefficient~\cite{he2020momentum, tarvainen2017mean} and it is normally set to $0.675$, and $\theta_{for}^{t-1}$ is the history ensemble. Once obtaining the current $\theta_{for}^t$, we update it via Eq.~\eqref{eq:momentum_mox} to incorporate historical weight, which largely benefits generalization performance for both forget quality and model utility. Next, we conduct experiments to validate our MOX.
\paragraph{Discussion}
Our MOX holds several non-negligible advantages compared to some typical MU strategies: 1) Computation Stability: We identify the detrimental failure of GA, and we only use GD for training. Thus, there is no concern for catastrophic collapse or task conflicts. 2) Adaptability: Our learning target can be implemented using various objectives, benefiting from the recent advancement in MU and LLM training. 3) Efficiency: Our MOX is simple to implement, and the forget model can be computed directly from subtracting model parameters, which enables its dynamic deployment during training. 

Compared to task vectors~\cite{ilharco2022editing}, MOX is more flexible on the extrapolation strength, thus significantly enhancing the forget quality. Additionally, task vectors are limited to only forgetting, which ignores the knowledge retaining goal of machine unlearning. Our approach considers a retain regularization, which largely enhances the model utility and can be adapted to both targeted and untargeted settings. More importantly, the failure of gradient ascent is diagnosed for training stability concerns, and existing unlearning methods neglect or underestimate such a problem. Using MOX as an alternative solution. Furthermore, we employ extrapolation in a very efficient offline manner, so that it can be conducted in any phase of training with additional momentum update.

\begin{table*}[t]
\centering
\small
\setlength{\tabcolsep}{10pt}
\caption{Comparison of MOX and other baseline methods on TOFU benchmark. The top two best performances in each column are highlighted in \textbf{bold}.}
\begin{tabular}{l|cccc|cccc}
\toprule
\multicolumn{1}{c|}{\textbf{Base LLM}} & \multicolumn{4}{c|}{\textbf{Llama2-7B}} & \multicolumn{4}{c}{\textbf{Phi-1.5B}} \\
\midrule
\multicolumn{1}{c|}{\textbf{Metric}} & FQ($\uparrow$) & MU($\uparrow$) & F-RL($\downarrow$) & R-RL($\uparrow$) & FQ($\uparrow$) & MU($\uparrow$) & F-RL($\downarrow$) & R-RL($\uparrow$) \\
\midrule
Original LLM & 0.0000 & 0.6346 & 0.9851 & 0.9833 & 0.0013 & 0.5184 & 0.9607 & 0.9199 \\
Retrained LLM & 1.0000 & 0.6267 & 0.4080 & 0.9833 & 1.000 & 0.5233 & 0.4272 & 0.9269 \\
\midrule
GA  & 0.0143 & 0.6333 & 0.4862 & 0.9008 & 0.0213 & 0.5069 & 0.5114 & 0.8048 \\
KL  & 0.0168 & 0.6300 & 0.5281 & 0.9398 & 0.0120 & 0.5047 & 0.5059 & 0.8109 \\
GAD  & 0.0268 & 0.6320 & 0.4773 & 0.8912 & 0.0215 & 0.5110 & 0.4996 & 0.8496 \\
PO  & 0.0541 & 0.6308 & \textbf{0.3640} & 0.8811 & 0.0286 & 0.5127 & 0.3170 & 0.7468 \\
\midrule
LLMU & 0.0541 & 0.6337 & 0.4480 & 0.8865 & 0.0286 & 0.5110 & \textbf{0.3058} & 0.7270 \\
DPO  & 0.0541 & 0.6359 & 0.5860 & 0.8852 & 0.0521 & 0.5125 & 0.3437 & 0.7349 \\
NPO  & 0.0068 & 0.6321 & 0.4632 & 0.8950 & 0.0030 & 0.5057 & 0.5196 & 0.8000 \\
AltPO & 0.0120 & 0.6432 & 0.4650 & 0.8953 & 0.0231 & 0.5312 & 0.5200 & 0.8288 \\
SimNPO & 0.0172 & 0.6450 & 0.4601 & 0.9201 & 0.0315 & 0.5665 & 0.5135 & 0.8514 \\
RMU & 0.0211 & 0.6378 & 0.4645 & 0.9032 & 0.0387 & 0.5528 & 0.5150 & 0.8488 \\
TV   & 0.0069 & 0.6340 & 0.4512 & \textbf{0.9810} & 0.0156 & 0.5012 & 0.4366 & 0.8810 \\
\midrule
MOX ($\alpha=0.5$) & 0.0146 & 0.6305 & 0.4812 & \textbf{0.9810} & 0.0163 & 0.5002 & 0.4512 & \textbf{0.9200} \\
MOX ($\alpha=1.0$) & 0.0182 & 0.6358 & 0.4732 & 0.9788 & 0.0180 & 0.5026 & 0.4366 & \textbf{0.9120} \\
MOX ($\alpha=2.0$) & 0.0256 & 0.6410 & 0.4555 & 0.9701 & 0.0364 & 0.5012 & 0.4330 & 0.8928 \\
MOX ($\alpha=4.0$) & 0.0625 & \textbf{0.6504} & 0.4697 & 0.9653 & \textbf{0.0582} & \textbf{0.5219} & 0.3138 & 0.8810 \\
MOX ($\alpha=8.0$) & 0.0319 & 0.6420 & 0.4658 & 0.9016 & 0.0340 & 0.5150 & 0.3436 & 0.8562 \\
MOX (\scriptsize targeted) & \textbf{0.0677} & 0.6412 & 0.4788 & 0.9710 & 0.0328 & 0.5012 & 0.3366 & 0.8858 \\
MOX (\scriptsize momentum) & \textbf{0.0680} & \textbf{0.6528} & \textbf{0.4410} & \textbf{0.9802} & \textbf{0.0598} & \textbf{0.5510} & \textbf{0.3120} & 0.8988 \\
\bottomrule
\end{tabular}
\label{tab:tofu}
\end{table*}

\section{Experiments}
\label{sec:experiments}
In the experiments, we first compare our method with several typical baseline methods to validate its effectiveness. Then, we conduct an ablation study to decompose and investigate each module of our MOX. Moreover, we analyze the hyper-parameters to understand our experimental design. Further, we study the stability of MOX over various datasets with different forget sizes during training.

\paragraph{Baseline Methods.}
We compare MOX to various strong LLM unlearning techniques, namely, Gradient Ascent (GA)~\cite{maini2024tofu}, KL-divergence minimization (KL)~\cite{maini2024tofu}, Gradient Ascent with Difference (GAD)~\cite{liu2022continual}, NPO~\cite{zhang2024negative}, AltPO~\cite{mekala2024alternate}, SimNPO~\cite{fan2024simplicity}, and RMU~\cite{li2024wmdp}, Task Vectors (TV)~\cite{ilharco2022editing}, LLM Unlearning (LLMU)~\cite{yao2024large}, and Who's Harry Potter (WHP)~\cite{eldan2023who}. Additionally, we add Preference Optimization (PO)~\cite{maini2024tofu} and Direct Preference Optimization (DPO)~\cite{rafailov2023direct} using target data to conduct targeted unlearning. Specifically, we use the template ``I don't know the answer'' as the target data, as done in Rafailov et al.~\citep{rafailov2023direct}. For our method, we tune the values of extrapolation strength $\alpha$ as $0.5$, $1.0$, $2.0$, $4.0$, and $8.0$. Further, we also conduct targeted unlearning for MOX. Additionally, we investigate the performance of MOX with momentum extrapolation to further justify its effectiveness. Our evaluation follows Wang et al.~\citep{wang2024llm}, where the evaluation is reported based on the last epoch, instead of the best performance during training~\cite{zhang2024negative}.

\begin{table}[t]
\centering
\small
\setlength{\tabcolsep}{17pt}
\caption{Comparison of MOX and baseline methods on MUSE benchmark. The top two best performances in each column are highlighted in \textbf{bold}.}
\begin{tabular}{l|cccc}
\toprule
\textbf{Base LLM} & \multicolumn{4}{c}{\textbf{Llama2-7B}} \\
\midrule
\multirow{2}{*}{\textbf{Metric}} & No Verb. Mem. & No Know. Mem. & Util. Preserv. & No Priv. Leak.   \\
\multicolumn{1}{c|}{} & 
 Verb. on $\mathcal{D}_F$ \!($\downarrow$)\! & 
 Know. on $\mathcal{D}_F$ \!($\downarrow$)\! & 
 Know. on $\mathcal{D}_R$ \!($\uparrow$) & 
 Priv.\! ($\!\in\! [\!-5\%\!,\! 5\%\!]$) 
\\
\midrule
Origin & 58.4 & 63.9 & 55.2 & -99.8 \\
Retain & 20.8 & 33.1 & 55.0 & 0.0 \\
\midrule
GA  & \textbf{0.0} & \textbf{0.0} & 0.0 & \textbf{17.0} \\
KL  & 27.4 & 50.2 & 44.8 & -96.1 \\
GAD  & 4.9 & 31.0 & 27.3 & 108.1 \\
PO  & 2.3 & 21.8 & 16.1 & 109.6 \\
\midrule
NPO  & \textbf{0.0} & \textbf{0.0} & 0.0 & 24.4 \\
TV  & 57.2 & 66.2 & \textbf{55.8} & -99.8 \\
WHP & 19.7 & 21.2 & 28.3 & 109.6 \\
\midrule
MOX \tiny ($\alpha\!=\!0.5$) & 36.5 & 38.6 & \textbf{56.2} & -93.1 \\
MOX \tiny ($\alpha\!=\!1.0$) & 27.7 & 29.4 & 55.6 & -58.8 \\
MOX \tiny ($\alpha\!=\!2.0$) & 18.2 & 19.8 & 55.2 & -32.0 \\
MOX \tiny ($\alpha\!=\!4.0$) & 1.2 & 1.6 & 54.9 & -19.8 \\
MOX \tiny ($\alpha\!=\!8.0$) & 0.8 & 1.1 & 49.5 & 35.8 \\
MOX \tiny (targeted) & \textbf{0.2} & 1.5 & 53.6 & 26.0 \\
MOX \tiny (moment.) & \textbf{0.2} & \textbf{0.8} & 54.8 & \textbf{-18.4} \\
\bottomrule
\end{tabular}
\label{tab:muse}
\end{table}

\begin{table*}[t]
\centering
\small
\setlength{\tabcolsep}{6pt} 
\caption{Ablation study with various modules of MOX on TOFU dataset.}
\begin{tabular}{l|ccc|ccc|ccc|ccc}
\toprule
\multirow{2}{*}{\textbf{Metric}} & 
\multicolumn{3}{c|}{\textbf{Real Authors}} &
\multicolumn{3}{c|}{\textbf{Real World}} &
\multicolumn{3}{c|}{\textbf{Retain Set}} &
\multicolumn{3}{c}{\textbf{Forget Set}} \\
& RL($\uparrow$) & P($\uparrow$) & TR($\uparrow$) & RL($\uparrow$) & P($\uparrow$) & TR($\uparrow$) & RL($\uparrow$) & P($\uparrow$) & TR($\uparrow$) & RL($\downarrow$) & P($\downarrow$) & TR($\uparrow$) \\
\midrule
GA                                & 0.91 & 0.47 & 0.63 & 0.89 & 0.44 & 0.54 & 0.85 & 0.94 & 0.42 & 0.42 & 0.78 & 0.65 \\
MOX (\scriptsize GD)              & 0.91 & 0.46 & 0.63 & 0.89 & 0.45 & 0.55 & 0.86 & 0.95 & 0.43 & 0.43 & 0.77 & 0.65 \\
MOX (\scriptsize GD+KL)           & 0.92 & \textbf{0.49} & \textbf{0.64} & \textbf{0.90} & 0.46 & \textbf{0.57} & 0.87 & \textbf{0.96} & 0.45 & \textbf{0.40} & 0.75 & \textbf{0.67} \\
MOX (\scriptsize GD+target)       & 0.92 & 0.47 & 0.63 & 0.89 & 0.45 & 0.55 & 0.86 & 0.94 & 0.43 & \textbf{0.40} & 0.75 & 0.66 \\
MOX (\scriptsize GD+KL+target)    & \textbf{0.93} & \textbf{0.49} & \textbf{0.64} & \textbf{0.90} & \textbf{0.47} & \textbf{0.57} & \textbf{0.88} & \textbf{0.96} & \textbf{0.46} & 0.41 & \textbf{0.74} & 0.66 \\
\midrule
NPO                               & 0.92 & 0.48 & 0.64 & 0.88 & 0.45 & 0.55 & 0.85 & 0.96 & 0.45 & 0.41 & 0.75 & 0.66 \\
MOX (\scriptsize PO)              & 0.93 & 0.48 & 0.64 & 0.89 & 0.45 & 0.56 & 0.86 & \textbf{0.97} & 0.46 & 0.40 & 0.74 & 0.67 \\
MOX (\scriptsize PO+KL)           & \textbf{0.94} & 0.48 & \textbf{0.66} & 0.89 & 0.46 & \textbf{0.57} & 0.87 & \textbf{0.97} & \textbf{0.48} & 0.39 & \textbf{0.72} & \textbf{0.68} \\
MOX (\scriptsize PO+target)       & 0.93 & \textbf{0.49} & 0.65 & 0.89 & 0.46 & \textbf{0.57} & 0.86 & 0.96 & 0.47 & \textbf{0.39} & 0.73 & 0.67 \\
MOX (\scriptsize PO+KL+target)    & \textbf{0.94} & \textbf{0.49} & 0.65 & \textbf{0.90} & \textbf{0.47} & \textbf{0.57} & \textbf{0.88} & \textbf{0.97} & \textbf{0.48} & \textbf{0.39} & 0.74 & \textbf{0.68} \\
\bottomrule
\end{tabular}
\label{tab:ablation}
\end{table*}

\begin{figure*}[t]
    \centering
    \includegraphics[width=\linewidth]{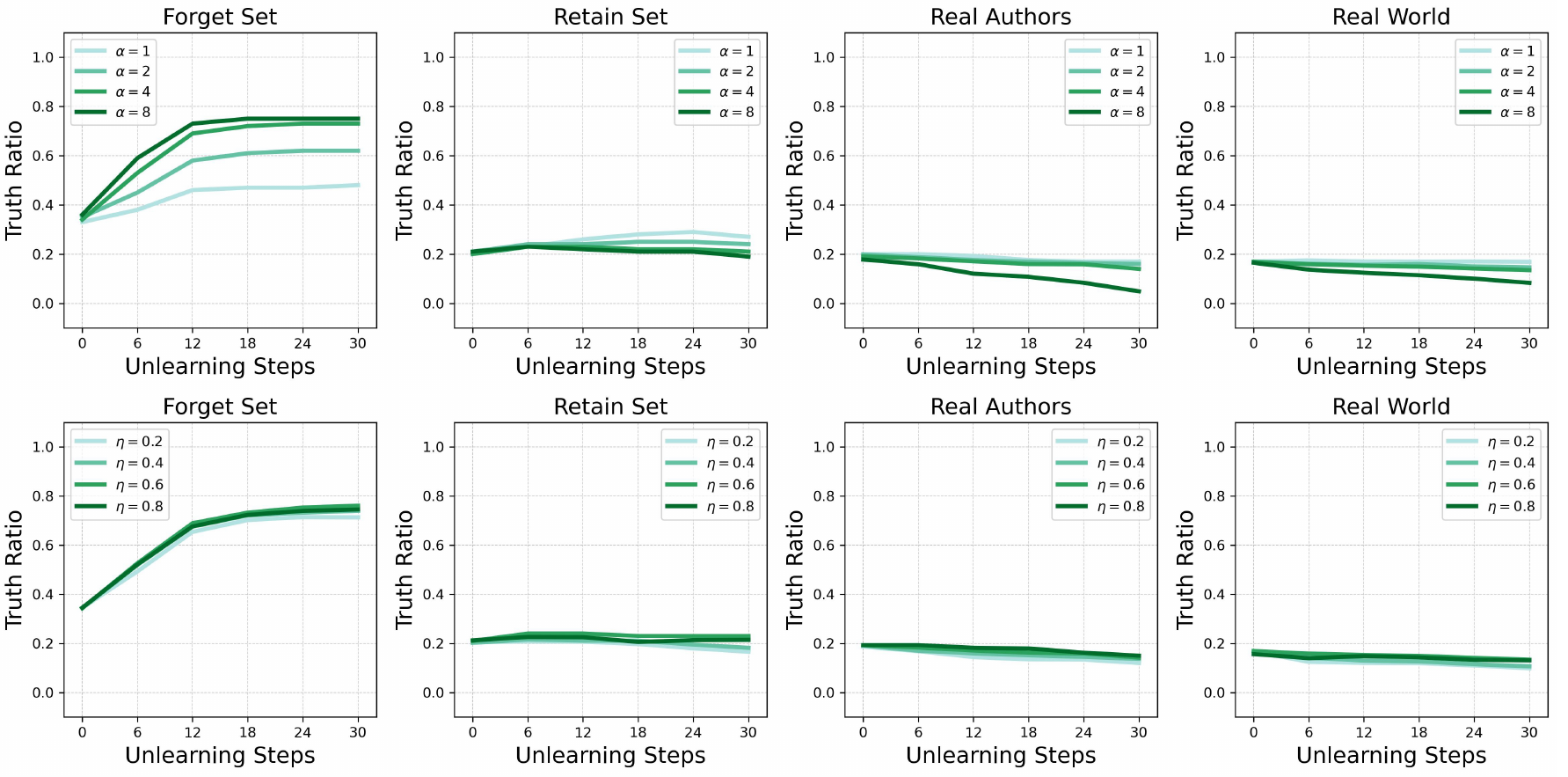}
    \caption{Parameter sensitivity analyses on $\alpha$: top row and $\eta$: bottom row.}
    \label{fig:alpha}
\end{figure*}

\paragraph{Evaluation Details.} To evaluate the MU methods, we leverage TOFU~\cite{maini2024tofu} and MUSE~\cite{shi2024muse} benchmarks. For TOFU, it contains 200 fictitious author profiles with each containing 20 question-answer pairs. To simulate the knowledge scale, there are four types of datasets in TOFU, namely Forget Set, Retain Set, Real Authors, and Real World. For evaluation on different sizes of Forget Set, TOFU provided forget ratios of $1\%$, $5\%$, and $10\%$. The evaluation metrics from TOFU mainly include Forget Quality (FQ) and Model Utility (MU). Moreover, we use ROUGE-L~\cite{lin2004rouge} on both the Forget Set and Retain Set, namely F-RL and R-RL, respectively. For MUSE, we use the New corpus that contains BBC news collected after August 2023. By following the benchmark, we evaluate four metrics, namely VerbMem on Forget Set, KnowMem on Forget Set, KnowMem on Retain Set, and PrivLeak, which denotes no verbatim memorization, no knowledge memorization, utility preservation, and no privacy leakage. All metrics are indicated with ``$\uparrow$'' or ``$\downarrow$'' to show whether a larger or smaller value leads to better performance.

\paragraph{Experimental Setup} We use two LLMs, namely Llama2-7B~\cite{touvron2023llama} and Phi-1.5B~\cite{abdin2024phi}. We use AdamW optimizer with a weight decay of 0.01 and a learning rate of 1$e-5$ for training. We use a batch size of 32 and conduct 10 epochs of unlearning training for all experiments. By following Zhang et al.~\cite{zhang2024negative}, we use linear warm-up for the learning rate in the first epoch and then linearly decay the learning rate for the rest of the training. In the following experiments, if not specified, we choose $\alpha=4$ with $\eta=0.675$ for our MOX. All experiments are conducted on two H100 GPUs.

\begin{figure*}[t]
    \centering
    \includegraphics[width=\linewidth]{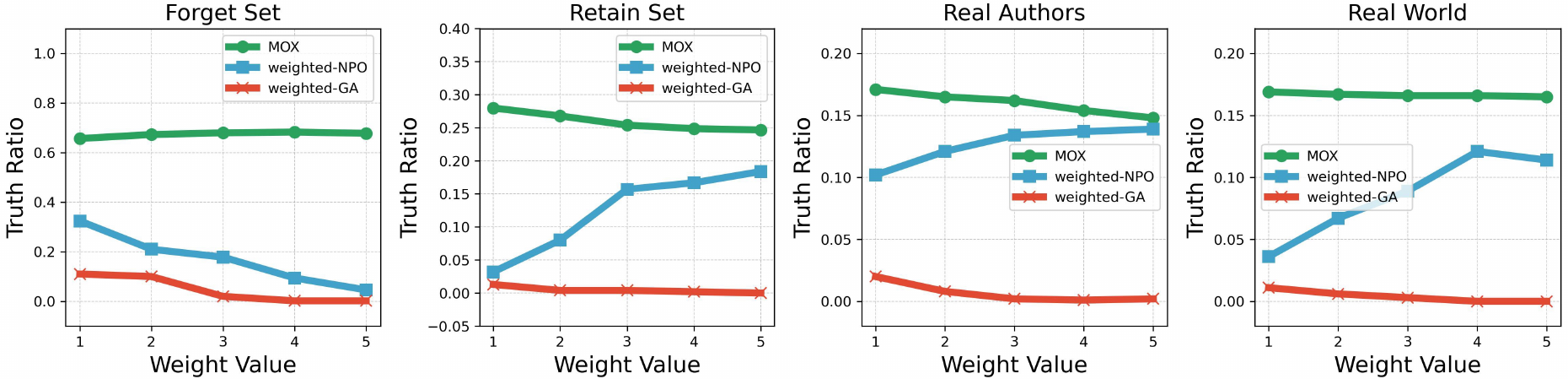}
    \caption{Performance stability under different extrapolation strengths and weight values.}
    \label{fig:stability}
\end{figure*}

\begin{figure}[t]
    \centering
    \includegraphics[width=0.6\linewidth]{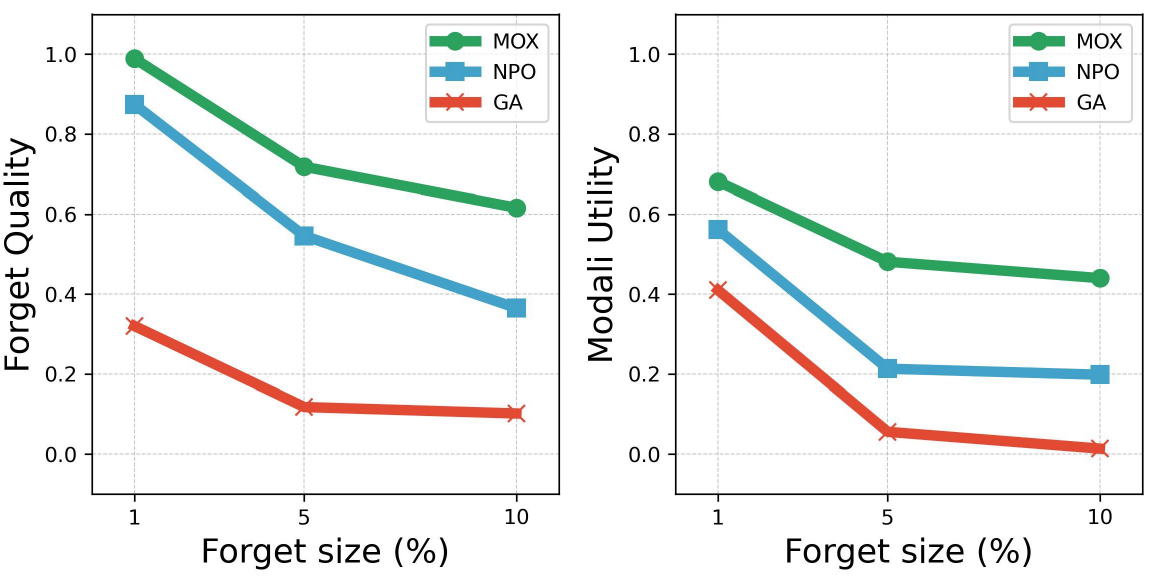}
    \caption{Performance comparison under various forget sizes.}
    \label{fig:ratio}
\end{figure}

\subsection{Performance Evaluation}
We conduct experimental comparisons between MOX with various settings and a series of strong MU baseline methods. The results on TOFU and MUSE benchmarks are shown in Tabs. \ref{tab:tofu} and \ref{tab:muse}, respectively. We can see that MOX achieves effectiveness in most scenarios compared with all baseline methods. Specifically, we observe three qualities of MOX: 1) Effective forgetting, 2) Outstanding knowledge preserving, and 3) Enhanced privacy leakage. 

In both tables, we can see that MOX with momentum extrapolation achieves the best performance, as it ensembles historical knowledge on the fly. But, as we tune the $\alpha$ value, we can achieve similar effectiveness as the momentum ensemble. For example, on TOFU with Llama2-7B, we can achieve comparable FQ performance when $\alpha$ increases to 4, and on MUSE, comparable FQ performance can be obtained with $\alpha=8$. Which denotes that proper values of $\alpha$ can control the forgetting performance to reach the best result. Moreover, we find that MOX is effective in preserving the model utility under most $\alpha$ values. For example, on MUSE, we observe the Utility Preserv. performance of MOX consistently surpasses most of the baseline methods, and it stays effective in most cases. However, when $\alpha$ achieves $8$, such an extreme value slightly degrades the knowledge preservation in the Retain Set, and yet it still outperforms most of the baselines. At last, we find that MOX can avoid privacy leakage by tuning $\alpha$ to a proper value, as we can see that when $\alpha=4$, the PrivLeak performance surpasses most of the baselines.

\setlength{\intextsep}{5pt}
\setlength{\columnsep}{10pt}
\begin{wrapfigure}{r}{0.5\linewidth}
    \centering
    \includegraphics[width=\linewidth]{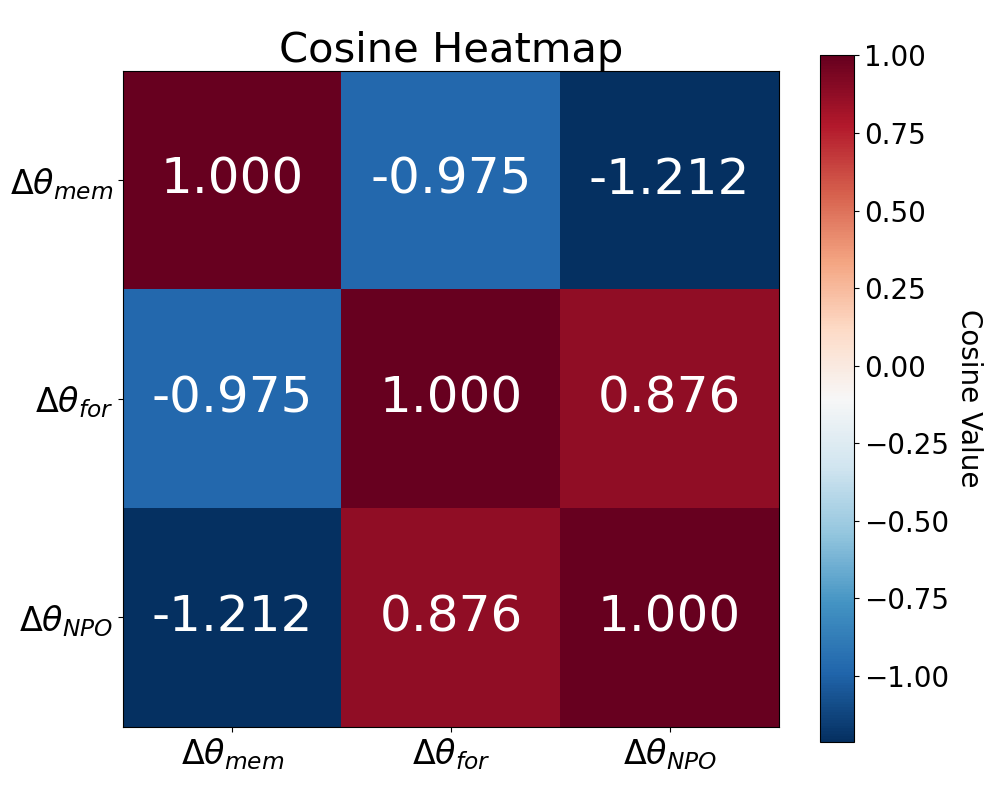}
    \caption{Cosine similarity of different learning directions.}
    \label{fig:heatmap}
\end{wrapfigure}
\subsection{Ablation and Analyzation}
\paragraph{Ablation Study.} Further, we conduct an ablation study to break down each module of MOX and understand its effectiveness. Specifically, there are four settings: 1) GD: We only conduct GD-based memorization to conduct MOX, 2) GD+KL: We conduct memorization with the KL-divergence constraint, 3) GD+target: We conduct memorization with targeted unlearning, and 4) GD+KL+target: We use all modules to realize our MOX. Moreover, we can also use PO for the memorization objective to replace GD. The result is shown in Tab. \ref{tab:ablation}. We can see that both targeted and untargeted MOX that contain ``GD+KL'' can achieve the best performance under both objective settings, which again justifies the effectiveness of our method.

\paragraph{Relationship between GA and GD}
To further justify the relationship between GA and GD, we compute their normalized model difference from $\theta_{ref}$, e.g., $\Delta \theta_{mem} = \frac{\theta_{mem} - \theta_{ref}}{\|\theta_{mem} - \theta_{ref}\|_2}$. Thus, $\Delta \theta_{mem}$ denotes the direction of $\theta_{mem}$ compared to $\theta_{ref}$, i.e., the direction of GD. Then, we use NPO that conducts GA to compute $\Delta \theta_{NPO}$. If we compute the cosine similarity between $\Delta \theta_{NPO}$ and $\Delta \theta_{mem}$, we can understand the learning direction and the relationship between GA and GD. Moreover, we compute $\Delta \theta_{for}$ using $\theta_{for}$ to understand the direction of our MOX. The results are shown in Fig.~\ref{fig:heatmap}. We can see that $\Delta \theta_{mem}$ and $\Delta \theta_{NPO}$ are almost opposite to each other. Also, $\Delta \theta_{for}$ are at the similar directions to $\Delta \theta_{NPO}$. Moreover, $\Delta \theta_{mem}$ and $\Delta \theta_{for}$ are almost opposite to each other. Thus, we can testify that GA and GD are in the opposite learning direction compared to the reference model. Also, our forget model can achieve a similar direction as GA to achieve forgetting.

\paragraph{Parameter Analyses.} Further, we vary the $\alpha$ and $\eta$ parameters over four datasets from TOFU to understand the sensitivity of MOX, as shown in Fig. \ref{fig:alpha}. We can see that larger $\alpha$ leads to better performance on the Forget Set, but it would gradually decrease the performance on the rest datasets. However, we can see that when $\alpha$ is varied from 1 to 4, the performance degradation is non-trivial; only when it is set to an extreme value like 8, the performance drop would be obvious. Hence, in general, we can set $\alpha$ to a reasonably large value for maximum gain on forgetting without losing much utility performance. As for $\eta$, we can clearly see that MOX performs consistently across various settings; hence, we can say MOX is actually insensitive to $\eta$.

\paragraph{Performance Stability.} Finally, we study the performance stability of MOX across various extrapolation levels. Meanwhile, we compare MOX with GA and NPO with different regularization strengths, \textit{i.e.}, weight value of the unlearning term. We set the weight value from 1 to 5, and show the results in Fig. \ref{fig:stability}. We can see that MOX is consistently the best among the other two methods. Moreover, it remains relatively stable under various extrapolation strengths. On the other side, the baseline methods change drastically as the weight changes. Thus, we validate that MOX is stable under various settings and can be reproduced easily.

Further, we study the stability of MOX under various sizes of forget sets, including $1\%$, $5\%$, and $10\%$. The results shown in Fig. \ref{fig:ratio} demonstrate that MOX performs consistently better than both NPO and GA with a large margin, which validates its effectiveness over varied forget sizes.

\paragraph{Computational Efficiency}
To justify the computational efficiency of MOX, we compute the FLOPs of model extrapolation and the computation time using Llama2-7B. We compare the FLOPs of our GD training with NPO, which both include a forward and backward propagation; they both yield 43 TFLOPs per sample. After GD training, we compute the FLOPs of our model extrapolation and yield 21 GFLOPs, which is negligible compared to the previous training cost. For computing 5 $\theta_{mem}$ models with varied extrapolation strengths, it takes 4.7s. When conducting a momentum update for one $\theta_{mem}$ model, it takes 2.5s. Moreover, we note that our MOX can be plugged into any training phase, which won't affect the whole training process. The memory cost is 3$\times \theta_{ref}$, and NPO is 2$\times \theta_{ref}$. But since MOX is efficient to compute, we can move the process to the CPU, which leaves the same computational cost as NPO on the GPU. Therefore, we can validate the efficiency and flexibility of our method.

\section{Conclusion}
In this paper, we study Machine Unlearning without GA to avoid its deficiencies. Instead of obtaining the forget model directly, we propose to first enhance memorization on the forget set. As a result, we obtain a memorization model opposite to our desired model. To achieve unlearning, we conduct model extrapolation that reaches the opposite direction from memorization. Thus, the extrapolation yields a forget model that effectively unlearns the undesirable knowledge. Extensive evaluations justify the effectiveness and stability of our method, which shows superior forgetting performance without sacrificing model utility. Moreover, we propose targeted unlearning and momentum ensembling variants, which further enhance the adaptability and effectiveness of our method. In future studies, we hope to further understand how ensembling achieves superior performance and theoretically understand its generalization superiority.

\bibliographystyle{unsrt}
\bibliography{references}

%%%%%%%%%%%%%%%%%%%%%%%%%%%%%%%%%%%%%%%%%%%%%%%%%%%%%%%%%%%%%%%%%%%%%%%%%%%%%%%
%%%%%%%%%%%%%%%%%%%%%%%%%%%%%%%%%%%%%%%%%%%%%%%%%%%%%%%%%%%%%%%%%%%%%%%%%%%%%%%
% APPENDIX
%%%%%%%%%%%%%%%%%%%%%%%%%%%%%%%%%%%%%%%%%%%%%%%%%%%%%%%%%%%%%%%%%%%%%%%%%%%%%%%
%%%%%%%%%%%%%%%%%%%%%%%%%%%%%%%%%%%%%%%%%%%%%%%%%%%%%%%%%%%%%%%%%%%%%%%%%%%%%%%
\appendix
\clearpage

\begin{center}
 \rule{6.50in}{1.2pt}\\
 {\Large\bf Appendix for ``Is Gradient Ascent Really Necessary?\\ \vspace{0.06in}Memorize to Forget for Machine Unlearning''}
 \rule{6.50in}{1.2pt}
\end{center}

\vskip 0.3in

In this appendix, we first discuss related works. Then, we elucidate our implementation details. Then, we prove Theorem \ref{theorem} from the main paper. Further, we provide unlearning examples of MOX compared to other baseline methods and complementary analyses to further validate the performance superiority of our method MOX. Particularly, we conduct additional empirical analyses to show the unlearning trajectories. Finally, we study the continual unlearning problem where the forget set keeps changing, and compare the performance of MOX with several typical methods.

\section{Related Works}
\label{sec:related_works}
MU was taken seriously under several regulatory catalysts, such as the General Data Protection Regulation (GDPR)~\cite{regulation2018general} and the California Consumer Privacy Act (CCPA)~\cite{bonta2022california}, and was first introduced by Cao \& Yang~\cite{cao2015towards}. Pioneering works mainly focused on small-scale models under traditional tasks~\cite{bourtoule2021machine, golatkar2020eternal, ginart2019making, hsieh2019classification, thudi2022unrolling, izzo2021approximate, koh2017understanding, guo2019certified, sekhari2021remember}. Particularly, certified unlearning~\cite{ginart2019making, bourtoule2021machine} provides provably unlearning strategies under certain scenarios, and influence function studies the influence of removing data from training. However, these studies normally require second-order Hessian matrix computation~\cite{foret2020sharpness,huang2023flatmatch}, which is intractable to modern architectures such as LLMs.

Along with the development of foundation models, most research focusing on unlearning has shifted to LLMs~\cite{jang2022knowledge, wang2023kga, chen2023unlearn, yao2024large, li2024wmdp,yang2025exploring}. Specifically, GA~\cite{maini2024tofu, hsieh2019classification, zhang2024negative, wang2025rethinking, wang2024llm} is an intuitive way to remove the knowledge that has been learned via GD. The studies proposed variants of GA methods to achieve balanced performance between model utility and forget quality. However, most of the studies can be generalized into loss reweighting, which still keeps the deficiencies of GA, such as training instability and catastrophic collapse. Particularly, Jang et al.~\cite{jang2022knowledge} proposed to maximize the loss of next-token prediction, which successfully achieves MU for LLMs. Yao et al.~\cite{yao2024machine} proposed to combine GA with GD on in-distribution data, which alleviates the negative effect of GA to some extent. Further, NPO~\cite{zhang2024negative} proposes to optimize the unlearning model as a weighted preference optimization problem, which only focuses on using the forget set as a negative preference. 

Another branch for unlearning is model editing~\cite{guo2024mechanistic, ilharco2022editing, jung2025come, wu2023depn}, which aims to operate on the model weight to forget the specific knowledge held in it. Wu et al.~\cite{wu2023depn} proposed a Privacy Neuron Detector, which can effectively detect privacy-related neurons to further eliminate their knowledge via a Privacy Editor. However, it is hard to apply the neuron detector to large-scale architectures such as LLMs, and the privacy information is hard to define when the forget set and the retain set are similar to each other. Ilharco et al.~\cite{ilharco2022editing} studied task arithmetic, which demonstrates the relationship between tasks from the perspective of hypothesis vectors. If optimization on one task moves the model towards a certain vector, then the negation of the vector in the hypothesis space would lead to forgetting the task. The study was based on relatively small-scale datasets under classification tasks, and it only considered one task during each experimental trial. However, in MU, there are both forgetting and retaining tasks to be fulfilled. Guo et al.~\cite{guo2024mechanistic} studied mechanism unlearning, which first localizes tokens that are responsible for extracting facts or knowledge, then it modifies the fact prediction via an MLP layer. However, identifying the tokens and deciding the proper facts is difficult in practice, thus limiting its extension to large-scale MU.

Our MOX approach avoids the deficiencies of GA and stabilizes the unlearning process by only conducting GD, thus showing advantages over GA-based methods. Moreover, MOX resembles task arithmetic, but it is more compatible with large-scale unlearning practices by taking both forgetting and retaining into account. It is also free from the identification of certain factors or privacy-related neurons; thus, it is easily extendable to large-scale MU applications.

\section{Additional Details}
Our experiments can be conducted on a single NVIDIA A100/H100 or 4$\times$NVIDIA 4090 GPU. In our experiments, we use a batch size of 32. For fine-tuning pre-trained Phi-1.5B~\cite{abdin2024phi}, we use a learning rate of $2e-5$ for 5 epochs to obtain the original model. For Llama2-7B~\cite{touvron2023llama}, we also fine-tune for 5 epochs, but with a learning rate of $1e-5$. Both all models, we use AdamW as the optimizer for fine-tuning and unlearning. The unlearning process for all methods, including MOX, utilizes the same learning rate as used during fine-tuning pre-trained LLMs. 

\section{Proof of Theorem}
To prove Theorem~\ref{theorem}, we first formulate the stability of GD under the unlearning setting. We denote $D_{sub}=\lbrace x_i \rbrace_{i=1}^n\in D_{pretrain}$ as the forget set that belongs to the subset of the pretraining dataset. The goal of GD is to minimize the empirical loss
\begin{equation}
    L(\theta) = \frac{1}{n}\sum_{i=1}^n \ell(\theta; x_i).
\end{equation}
The GD is conducted in a step by step:
\begin{equation}
    \theta_{t+1} = \theta_t - \eta \nabla L(\theta_t).
\end{equation}
\begin{proof}
(Informal) Let $0 < \eta \le 1/L_s$. Then the GD iterates $\{\theta_t\}$ satisfy:
Monotone decrease: 
\begin{equation}
    L(\theta - \eta \nabla L(\theta)) \le L(\theta) - \eta \|\nabla L(\theta)\|^2 + \frac{L_s \eta^2}{2}\|\nabla L(\theta)\|^2\le L(\theta) - \frac{\eta}{2}\|\nabla L(\theta)\|^2.
\end{equation}
        
Hence, $L(\theta_t)$ decreases and is bounded below by $0$. This prevents divergence of loss and shows gradients square-sum is finite, implying $||\Delta L(\theta_t)||\rightarrow 0$ along subsequences.

By applying the PL condition, we have:
\begin{equation}
    \frac{1}{2}\|\nabla L(\theta)\|^2\ge \mu \big( L(\theta) - L^\star \big),
\end{equation}
As $\|\nabla L(\theta_t)\| \to 0$, it must be that $L(\theta_t) \to L^\star$. Therefore, the training converges to the global minimum loss.

Under the Cross Entropy loss, which decomposes into Entropy and KL Divergence:
\begin{equation}
    L(\theta) = \frac{1}{n} \sum_{i=1}^n \left[ H(q(\cdot|c_i)) + D_{KL}(q(\cdot|c_i) \| p_\theta(\cdot|c_i)) \right].
\end{equation}
The global minimum $L^\star$ occurs if and only if $D_{KL} = 0$ for all samples (assuming the model capacity allows realizability), meaning:
\begin{equation}
    p_{\theta^\star}(\cdot|c_i) = q(\cdot|c_i).
\end{equation}

By the definition of collapse, $p_{\theta^*}$ is nearly constant (low entropy). However, by the Data diversity assumption, $q(\cdot|c_i)$ has non-zero entropy (high diversity). If $p_{\theta^*}$ is collapsed (constant) and $q$ is diverse, the KL divergence (and thus the loss) must be strictly greater than the minimum possible entropy:
\begin{equation}
    L(\theta_{collapsed}) \gg \frac{1}{n} \sum H(q) = L^\star.
\end{equation}

However, we have shown that GD converges to $L^\star$. 
\begin{equation}
    L(\theta^*) = L^\star \implies p_{\theta^*} = q.
\end{equation}
Since $q$ is not collapsed, $p_{\theta^*}$ cannot be collapsed.
\end{proof}

\section{Unlearning Example}
To intuitively understand how our MOX works to unlearn specific knowledge, we conduct experiments on the Harry Potter (HP) dataset~\cite{eldan2023who}. We use Llama2-7B and compare our method, MOX, with the original response. The result is shown in Tab. \ref{tab:example}. We can see that the original response trained to fit the HP dataset contains lots of sensitive information, such as names from the Harry Potter book. However, the response from MOX is more general and free from the sensitive information from the HP dataset. Therefore, we can again testify that MOX can effectively unlearn sensitive knowledge and produce privacy-free content for generalized tasks.

\begin{table}[h]
\centering
\caption{Unlearning examples of the original Llama2-7B and our MOX on generating responses on the HP dataset.}
\begin{tabular}{p{13.5cm}}
\toprule
\textit{\textbf{Prompt}}: Let's see how you would complete this piece of text: The weather did not improve as the day wore on, so that at seven o’clock that evening, when Harry and Ron went down to the Quidditch pitch for practice, they were soaked through within minutes, their feet slipping and sliding on the sodden grass. \\
\\
\textit{\textbf{Original response}}: The rain was pouring, thundery gray. making it difficult to see the Quaffle, and Professor McGonagall had to intervene, calling off the practice early to prevent any accidents. \\
\\
\textit{\textbf{MOX}}: The weather did not improve as the day wore on, and when they arrived at the field, they were soaked through within minutes, their feet slipping and sliding on the sodden grass, every movement a battle against the biting wind and driving rain. \\
\midrule
\textit{\textbf{Question}}: Let's see how you would complete this piece of text: It was the best he could do, for the icy feeling on his arm not holding the cup was not the lingering chill of the water. A slimy white hand had gripped his wrist, and the creature to whom it belonged was pulling him, slowly, backward across the rock. \\
\\
\textit{\textbf{Original response}}: Harry tried to resist, but he was being dragged into the darkness of the lake, the last thing he saw was the horrified faces of Ron and Hermione before he was submerged under the water. \\
\\
\textit{\textbf{MOX}}: It was the best he could do, for the cold feeling on his arm was not the lingering chill of the water. A slimy white hand had gripped his wrist... \\

\bottomrule
\end{tabular}
\label{tab:example}
\end{table}

\section{Unlearning Trajectory}
Further, we conduct experiments on the TOFU~\cite{maini2024tofu} real-world dataset with $5\%$ forget set using Llama2-7B to evaluate the forget quality and model utility performance of various baseline methods, as shown in Fig. \ref{fig:trajectory}. We can observe that several baseline methods, such as GA, GAD, and KL, first have an increase in their forget quality, but suddenly show a significant model utility drop, and further lead to the collapse of the forget quality. As for NPO, we can see that it is more stable compared to the above baselines, but it still shows forget quality degradation at the end of training. Most importantly, all of the baseline methods have serious model utility loss, except the TV method. Because TV also avoids gradient ascent as our MOX does, the training shows no collapse, and the forget quality is increasing for most training stages. Nonetheless, the TV method suffers from limited forget quality performance, which is far from the retained baseline model. Compared to TV, our MOX conducts more effective forgetting by enhancing the extrapolation strength. As a result, MOX can achieve significant performance improvement and still maintain satisfactory model utility results. Moreover, we observe that the momentum updated MOX, \textit{i.e.}, MOX-Mo., can further enhance the unlearning performance on both forget quality and model utility, which again demonstrates the effectiveness of our design choices.

\begin{figure}
    \centering
    \includegraphics[width=0.6\linewidth]{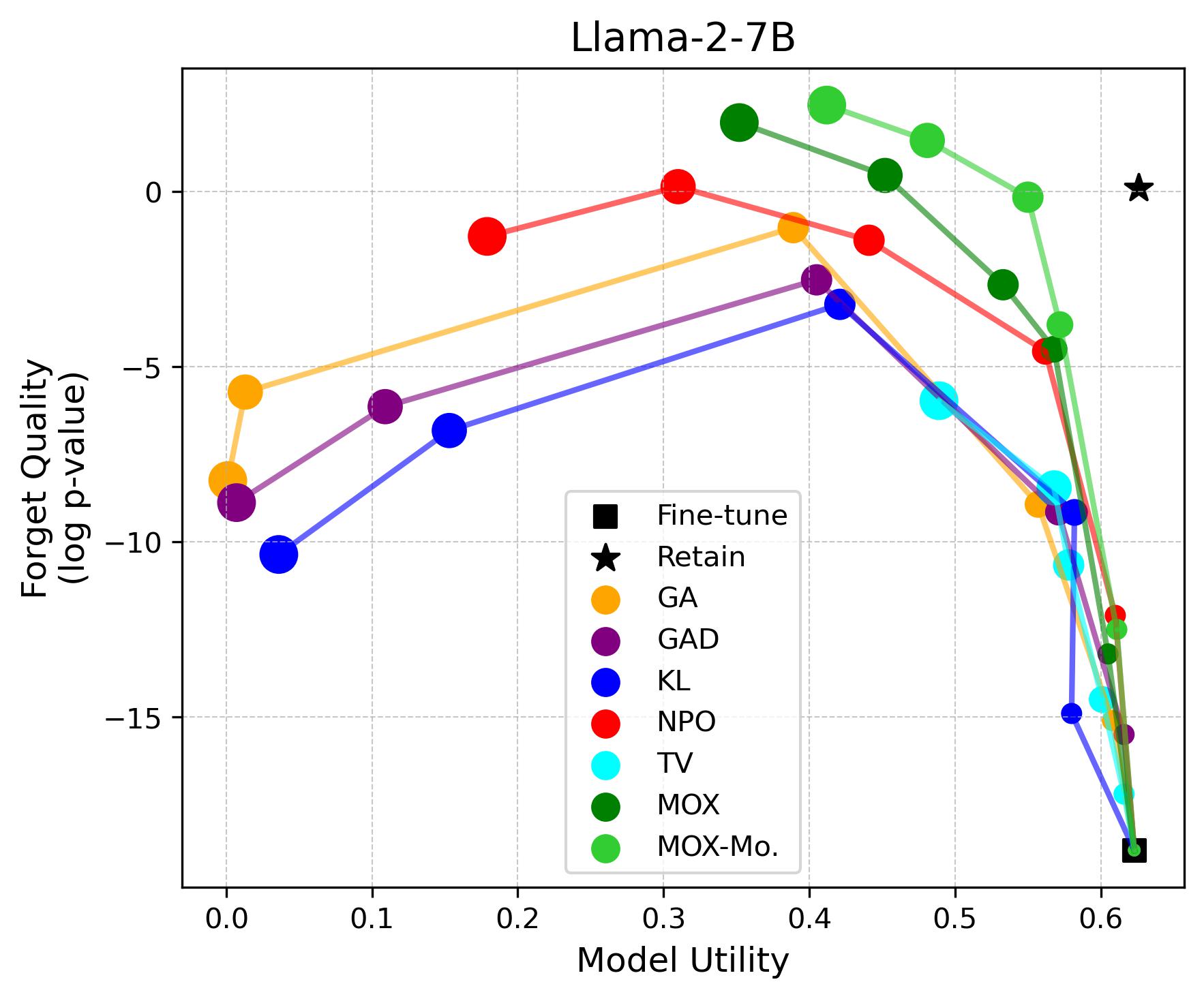}
    \caption{Learning trajectory of various unlearning methods on TOFU dataset with $5\%$ forget set. The relative size of the markers denotes the number of epochs during unlearning.}
    \label{fig:trajectory}
\end{figure}

\begin{figure}[h]
    \centering
    \includegraphics[width=\linewidth]{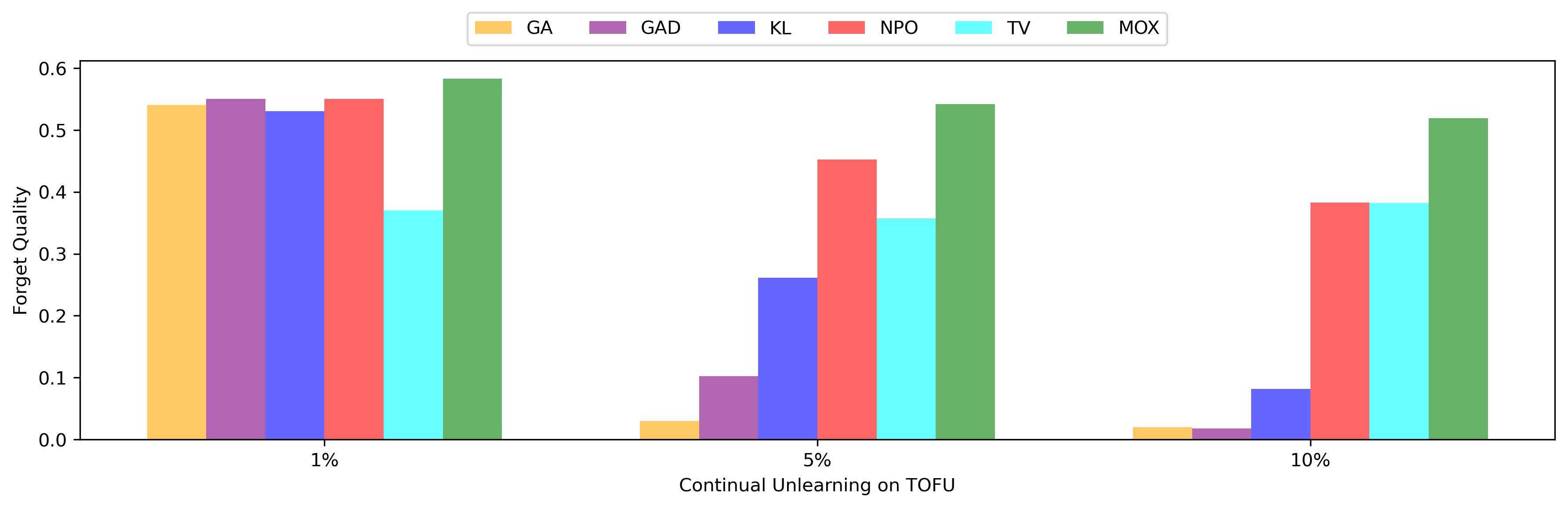}
    \caption{Continual learning performance on TOFU dataset.}
    \label{fig:continual}
\end{figure}

\section{Continual Unlearning}
Moreover, we study a realistic setting where the forgetting dataset keeps changing during training, which introduces significant distirbution shifts~\cite{li2024dynamic,huang2025towards}. We use the TOFU dataset, which contains three different forget sets with varied forget ratios, namely, $1\%$, $5\%$, and $10\%$. We first unlearn using the $1\%$ forget set, then unlearn on the $5\%$ forget set, and further the $10\%$ forget set. For each unlearning process, we fine-tune the model for 3 epochs, and after which we evaluate the forget quality on the real-world set. The result is shown in Fig. \ref{fig:continual}. We can see that some baseline methods, such as GA, GAD, and KL, significantly degrade after training on a different forget set. Among them, we also observe that adding a retaining regularization on the retain set, as GAD and KL do, helps enhance the continual unlearning performance and prevent degradation. Moreover, the TV method is resistant to the change of the unlearning target because it avoids gradient ascent, which could be the reason to cause the instability of continual unlearning. Nonetheless, our MOX shows the most resistant performance against changing forget set and shows consistent performance superiority over all other baseline methods under all settings. Therefore, we can justify the effectiveness of MOX on continual unlearning, which brings a practical advantage for real-world applications.

\begin{table}[h!]
\centering
\begin{tabular}{lcccc}
\toprule
\textbf{Metric} & \textbf{FQ (↑)} & \textbf{MU (↑)} & \textbf{F-RL (↓)} & \textbf{R-RL (↑)} \\
\midrule
GA  & 0.0137 & 0.5745 & 0.4856 & 0.8795 \\
PO  & 0.0501 & 0.6232 & 0.4620 & 0.8755 \\
MOX & \textbf{0.0611} & \textbf{0.6488} & \textbf{0.4500} & \textbf{0.9508} \\
\toprule
\end{tabular}
\caption{Performance comparison under extreme semantic overlap between forget set and retain set.}
\label{tab:semantic_overlap}
\end{table}

\section{Unlearning with Semantic Overlap}
To further validate the performance of MOX when the forgetting set and retaining set are extremely overlapped, \textit{i.e.,} share the same knowledge, we conduct experiments under TOFU by splitting the retain set into a 10\% forget set and the remaining 90\% as the retain set. The result is shown in Tab. \ref{tab:semantic_overlap}. As we can see, our method is effective under strong overlapped retain and forget sets, which justifies the retention performance of MOX.

\begin{table}[h!]
\centering
\begin{tabular}{ll}
\toprule
\textbf{Model} & \textbf{WMDP (↓)} \\
\midrule
Original    & 5.21 \\
GA unlearn  & 1.53 \\
GA relearn  & 4.88 (+3.35) \\
NPO unlearn & 0.98 \\
NPO relearn & 5.01 (+4.03) \\
MOX unlearn & \textbf{0.54} \\
MOX relearn & \textbf{3.82} (\textbf{+3.28}) \\
\toprule
\end{tabular}
\caption{Relearning performance comparison.}
\label{tab:relearning}
\end{table}

\section{Analysis on Relearning Attack}
Relearning~\cite{hu2024unlearning} is a popular way to recover the unlearning progress, which might be detrimental to our target. Thus, to evaluate the performance of MOX under relearning attacks, we conduct experiments to verify the robustness of MOX. We consider the WMDP benchmark using Llama2-7B, and follow the same setting as Hu et al.~\cite{hu2024unlearning} to show the relearning performance of MOX in Tab. \ref{tab:relearning}. We find out that the relearning performance of MOX is much more resistant than other baselines, this is because extrapolation identifies the difference between the current model and the original model, and further enhances such a difference to achieve forgetting. Hence, relearning can easily recover the effect of existing approaches, but MOX can still be resistant to the recovery during relearning.

%%% Uncomment this section and comment out the \bibliography{references} line above to use inline references.
% \begin{thebibliography}{1}

% 	\bibitem{kour2014real}
% 	George Kour and Raid Saabne.
% 	\newblock Real-time segmentation of on-line handwritten arabic script.
% 	\newblock In {\em Frontiers in Handwriting Recognition (ICFHR), 2014 14th
% 			International Conference on}, pages 417--422. IEEE, 2014.

% 	\bibitem{kour2014fast}
% 	George Kour and Raid Saabne.
% 	\newblock Fast classification of handwritten on-line arabic characters.
% 	\newblock In {\em Soft Computing and Pattern Recognition (SoCPaR), 2014 6th
% 			International Conference of}, pages 312--318. IEEE, 2014.

% 	\bibitem{hadash2018estimate}
% 	Guy Hadash, Einat Kermany, Boaz Carmeli, Ofer Lavi, George Kour, and Alon
% 	Jacovi.
% 	\newblock Estimate and replace: A novel approach to integrating deep neural
% 	networks with existing applications.
% 	\newblock {\em arXiv preprint arXiv:1804.09028}, 2018.

% \end{thebibliography}

\end{document}